\newdimen\paravsp  \paravsp=1.3ex
\def\paradot#1{\vspace{\paravsp plus 0.5\paravsp minus 0.5\paravsp}\noindent{\bf\boldmath{#1.}}}
\newtheorem{theorem}{Theorem}
\newtheorem{corollary}[theorem]{Corollary}
\newtheorem{definition}[theorem]{Definition}
\newtheorem{example}[theorem]{Example}
\newtheorem{proposition}[theorem]{Proposition}
\newtheorem{remark}[theorem]{Remark}
\newenvironment{proof}{{\noindent\bf Proof.}}{\vskip 1ex}
\newenvironment{keywords}{\centerline{\bf\small
Keywords}\begin{quote}\small}{\par\end{quote}\vskip 1ex}
\begin{document}

\title{
\vskip 2mm\bf\Large\hrule height5pt \vskip 4mm
Axioms for Rational Reinforcement Learning
\vskip 4mm \hrule height2pt}

\author{{\bf Peter Sunehag} and {\bf Marcus Hutter}\\[3mm]
\normalsize Research School of Computer Science\\[-0.5ex]
\normalsize Australian National University\\[-0.5ex]
\normalsize Canberra, ACT, 0200, Australia\\
\texttt{\small \{Peter.Sunehag,Marcus.Hutter\}@anu.edu.au}
}
\date{July 2011}

\maketitle

\begin{abstract}
We provide a formal, simple and intuitive theory of rational
decision making including sequential decisions that affect the
environment. The theory has a geometric flavor, which makes the
arguments easy to visualize and understand. Our theory is for
complete decision makers, which means that they have a complete set
of preferences. Our main result shows that a complete rational
decision maker implicitly has a probabilistic model of the
environment. We have a countable version of this result that brings
light on the issue of countable vs finite additivity by showing how
it depends on the geometry of the space which we have preferences
over. This is achieved through fruitfully connecting rationality
with the Hahn-Banach Theorem. The theory presented here can be
viewed as a  formalization and extension of the betting odds
approach to probability of Ramsey and De Finetti \cite{Ram31,deF37}.
\def\contentsname{\centering\normalsize Contents}\setcounter{tocdepth}{1}
{\parskip=-2.7ex\tableofcontents}
\end{abstract}

\begin{keywords}
Rationality; Probability; Utility; Banach Space; Linear Functional.
\end{keywords}

\newpage
\section{Introduction}

We study complete decision makers that can take a sequence of
actions to rationally pursue any given task. We suppose that the
task is described in a reinforcement learning framework where
the agent takes actions and receives observations and rewards. The
aim is to maximize total reward in some given sense.

Rationality is meant in the sense of internal consistency
\cite{Sugden}, which is how it has been used in \cite{NeuMor44} and
\cite{Sav54}. In \cite{NeuMor44}, it is proven that preferences
together with rationality axioms and probabilities for possible
events imply the existence of utility values for those events that
explain the preferences as arising through maximizing expected
utility. Their rationality axioms are
\begin{enumerate}
\item Completeness: Given any two choices we either prefer one of them
to the other or we consider them to be equally preferable;
\item Transitivity: A preferable to B and B to C imply A preferable to
C;
\item Independence: If A is preferable to B
and $t\in [0,1]$ then $tA+(1-t)C$ is preferable (or equal) to
$tB+(1-t)C$; \item Continuity: If A is preferable to B and B to C
then there exists $t\in [0,1]$ such that $B$ is equally preferable
to $tA+(1-t)C$.
\end{enumerate}
 In \cite{Sav54} the probabilities are not given but it
is instead proven that preferences together with rationality axioms
imply the existence of probabilities and utilities. We are here
interested in the case where one is given utility (rewards) and
preferences over actions and then deriving the existence of a
probabilistic world model. We put an emphasis on extensions to
sequential decision making with respect to a countable class of
environments. We set up simple axioms for a rational decision maker,
which implies that the decisions can be explained (or defined) from
probabilistic beliefs.

The theory of \cite{Sav54} is called subjective expected utility
theory (SEUT) and was intended to provide statistics with a strictly
behaviorial foundation. The behavioral approach stands in stark
contrast to approaches that directly postulate axioms that
``degrees of belief" should satisfy \cite{Cox46,Hal99,Jay03}. Cox's
approach \cite{Cox46,Jay03} has also been found \cite{Paris94} to
need additional technical assumptions in addition to the common
sense axioms originally listed by Cox. The original proof by
\cite{Cox46} has been exposed as not mathematically rigorous and his
theorem as wrong \cite{Hal99}. An alternative approach by
\cite{Ram31,deF37} is interpreting probabilities as fair betting
odds.

The theory of \cite{Sav54} has greatly influenced economics
\cite{Sugden} where it has been used as a description of rational
agents. Seemingly strange behavior was explained as having beliefs
(probabilities) and tastes (utilities) that were different from
those of the person to whom it looked irrational. This has turned
out to be insufficient as a description of human behavior
\cite{Allais,Ellsberg} and it is better suited as a normative theory
or design principle in artificial intelligence. In this article, we
are interested in studying the necessity for rational agents
(biological or not) to have a probabilistic model of their
environment. To achieve this, and to have as simple common sense
axioms of rationality as possible, we postulate that given any set
of values (a contract) associated with the possible events, the
decision maker needs to have an opinion on wether he prefers these
values to a guaranteed zero outcome or not (or equal). From this
setting and our other rationality axioms we deduce the existence of
probabilities that explain all preferences as maximizing expected
value. There is an intuitive similarity to the idea of
explaining/deriving probabilities as a bookmaker's betting odds as
done in \cite{deF37} and \cite{Ram31}. One can argue that the theory
presented here (in Section \ref{two}) is a formalization and
extension of the betting odds approach. Geometrically, the result
says that there is a hyper-plane in the space of contracts that
separates accept from reject. We generalize this statement, by using
the Hahn-Banach Theorem, to the countable case where the set of
hyper-planes (the dual space) depends on the space of contract. The
answers for different cases can then be found in the Banach space
theory literature. This provides a new approach to understanding
issues like finite vs.\ countable additivity. We take advantage of
this to formulate rational agents that can deal successfully with
countable (possibly universal as in all computable environments)
classes of environments.

Our presentation begins in Section \ref{two} by first looking at a
fundamental case where one has to accept or reject certain contracts
defining positive and negative rewards that depend on the outcome of
an event with finitely many possibilities. To draw the conclusion
that there are implicit unique probabilistic beliefs, it is
important that the decision maker has an opinion (acceptable,
rejectable or both) on every possible contract. This is what we mean
when we say \emph{complete decision maker}.

In a more general setting, we consider sequential decision
making where given any contract on the sequence of observations
and actions, the decision maker must be able to choose a policy
(i.e.\ an action tree). Note that the actions may affect the
environment. A contract on such a sequence can e.g.\ be viewed
as describing a reward structure for a task. An example of a
task is a cleaning robot that gets positive rewards for
collecting dust and negative for falling down the stairs. A
prerequisite for being able to continue to collect dust can be
to recharge the battery before running out. A specialized
decision maker that deals only with one contract/task does not
always need to have implicit probabilities, it can suffice with
qualitative beliefs to take reasonable decisions. A qualitative
belief can be that one pizza delivery company (e.g.\ Pizza Hut
vs Dominos) is more likely to arrive on time than the other. If
one believes the pizzas are equally good and the price is the
same, we will chose the company we believe is more often
delivering on time. Considering all contracts (reward
structures) on the actions and events, leads to a situation
where having a way of making rational (coherent) decisions,
implies that the decision maker has implicit probabilistic
beliefs. We say that the probabilities are implicit because the
decision maker, which might e.g.\ be a human, a dog, a computer
or just a set of rules, might have a non-probabilistic
description of how the decisions are made.

In Section \ref{three}, we investigate extensions to the case with
countably many possible outcomes and the interesting issue of
countable versus finite additivity. Savage's axioms are known to
only lead to finite additivity while \cite{Arrow70} showed that
adding a monotone continuity assumption guarantees countable
additivity. We find that in our setting, it depends on the space of
contracts in an interesting way. In Section \ref{four}, we discuss a
setting where we have a class of environments.

\section{Rational Decisions for Accepting or Rejecting Contracts}\label{two}

We consider a setting where we observe a
symbol (letter) from a finite alphabet and we are offered a form of
bet we call a contract that we can accept or not.
\begin{definition}[Passive Environment, Event]
A passive environment is a sequence of symbols (letters) $j_t$,
called events, being presented one at a time. At time $t$ the
symbols $j_1,...,j_t$ are available. We can equivalently say that a
passive environment is a function $\nu$ from finite strings to
$\{0,1\}$ where $\nu(j_1,...,j_t)=1$ if and only if the environment
begins with $j_1,...,j_t$.
\end{definition}

\begin{definition}[Contract]
Suppose that we have a passive environment with symbols from an
alphabet with $m$ elements. A contract for an event is an element
$x=(x_1,...,x_m)$ in $\mathbb{R}^m$ and $x_j$ is the reward received
if the event is the $j$:th symbol, under the assumption that the
contract is accepted (see next definition).
\end{definition}

\begin{definition}[Decision Maker, Decision]
A decision maker (for some unknown environment) is a set
$Z\subset\mathbb{R}^m$ which defines exactly the contracts that are
acceptable. In other words, a decision maker is a function from
$\mathbb{R}^m$ to $\{$accepted, rejected, either$\}$. The function
value is called the decision.
\end{definition}

 If $x\in Z$ and $\lambda\geq 0$ then we want $\lambda
x\in Z$ since it is simply a multiple of the same contract. We also
want the sum of two acceptable contracts to be acceptable. If we
cannot lose money we are prepared to accept the contract. If we are
guaranteed to win money we are not prepared to reject it. We
summarize these properties in the definition below of a rational
decision maker.
\begin{definition}[Rationality I]
We say that the decision maker ($Z\subset\mathbb{R}^m$) is rational
if
\begin{enumerate}
\item Every contract $x\in\mathbb{R}^m$ is either acceptable or rejectable or
both;
\item $x$ is acceptable if and only if $-x$ is rejectable;
\item $x,y\in Z$, $\lambda,\gamma \geq 0$ then $\lambda x+\gamma y\in Z$;
\item If $x_k\geq 0\ \forall k$ then $x=(x_1,...,x_m)\in Z$ while
if $x_k<0\ \forall k$ then $x\notin Z$.
\end{enumerate}
\end{definition}

If we want to compare these axioms to rationality axioms for a
preference relation on contracts we will say that $x$ is better or
equal (as in equally good) than $y$ if $x-y$ is acceptable while it is
worse or equal if $x-y$ is rejectable. The first axiom is
completeness. The second says that if $x$ is better or equal than $y$
then $y$ is worse or equal to $x$. The third implies transitivity
since $(x-y)+(y-z)=(x-z)$. The fourth says that if $x$ has a better
(or equal) reward than $y$ for any event, then $x$ is better (or
equal) than $y$.

\subsection{Probabilities and Expectations}

\begin{theorem}[Existence of Probabilities]\label{thm:P}
Given a rational decision maker, there are numbers $p_i\geq 0$ that
satisfy
\begin{equation}\label{impl}
\{x\ |\ \sum x_ip_i> 0\}\subset Z\subseteq\{x\ |\ \sum x_ip_i\geq 0\}.
\end{equation}
Assuming $\sum_i p_i=1$ makes the numbers unique and we will use the
notation $Pr(i)=p_i$.
\end{theorem}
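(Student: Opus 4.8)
The plan is to translate the four axioms into geometry: by axiom 3, $Z$ is a convex cone with apex $0$; by axiom 4, $Z$ contains the closed positive orthant $\{x : x_k \ge 0\ \forall k\}$ --- in particular every standard basis vector $e_i$ --- and is disjoint from the open negative orthant $N = \{x : x_k < 0\ \forall k\}$; and axioms 1 and 2 together say $Z \cup (-Z) = \mathbb{R}^m$. I want to produce one hyperplane through the origin whose positive side sandwiches $Z$, with normal vector $p = (p_1,\dots,p_m)$, and then show $p$ is the unique nonnegative probability vector with property \eqref{impl}.

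First I would prove $Z \subseteq \{x : \sum x_i p_i \ge 0\}$. Since $N$ is a nonempty open convex set disjoint from the convex set $Z$, the separating hyperplane theorem in $\mathbb{R}^m$ gives a nonzero $p$ and a constant $c$ with $p\cdot n \le c \le p \cdot z$ for all $n \in N$, $z \in Z$ (replacing $p$ by $-p$ if necessary to get this orientation). Because $0 \in Z$ we get $c \le 0$; and because $Z$ is a cone, $p \cdot z < 0$ for some $z \in Z$ would make $p\cdot(\lambda z) = \lambda(p\cdot z) \to -\infty$ along $\lambda z \in Z$, contradicting that $c$ is a lower bound. Hence $p \cdot z \ge 0$ for all $z \in Z$. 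Evaluating at $z = e_i \in Z$ gives $p_i \ge 0$ for every $i$, and since $p \ne 0$ we have $\sum_i p_i > 0$, so rescaling lets us assume $\sum_i p_i = 1$.

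For the reverse inclusion $\{x : \sum x_i p_i > 0\} \subseteq Z$, suppose $\sum x_i p_i > 0$ but $x \notin Z$. By axiom 1 the contract $x$ is then rejectable, so by axiom 2 we have $-x \in Z$, and the inclusion just established gives $\sum (-x_i) p_i \ge 0$ --- contradicting $\sum x_i p_i > 0$. This yields \eqref{impl}. For uniqueness, if $p \ne q$ are nonnegative with $\sum p_i = \sum q_i = 1$ and both satisfy \eqref{impl}, then the closed half-spaces $\{x : p\cdot x \ge 0\}$ and $\{x : q \cdot x \ge 0\}$ are distinct (if they coincided, $p$ and $q$ would be positive multiples of one another, hence equal by the normalization), so there is an $x$ with $p\cdot x > 0$ and $q\cdot x < 0$; the first condition forces $x \in Z$ by \eqref{impl} for $p$, the second forces $x \notin Z$ by \eqref{impl} for $q$ --- a contradiction, so $p = q$.

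The crux is the first step: obtaining the separating functional and then exploiting the cone structure to slide the separating constant down to $0$, i.e.\ upgrading ``$Z$ lies on one side of some hyperplane'' to ``$Z$ lies on one side of a hyperplane through the origin.'' The other thing to keep in mind is that one should not hope for equality in \eqref{impl}: $Z$ need be neither open nor closed --- part of the boundary hyperplane $\{x : \sum x_i p_i = 0\}$ may lie in $Z$ and part not --- which is precisely why the statement is a sandwich between the open and the closed half-space.
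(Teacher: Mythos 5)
Your proof is correct, but it reaches the separating functional by a different route than the paper does. The paper proves Theorem~\ref{thm:P} as a special case of Theorem~\ref{thm:LF}: there the two sets being separated are the \emph{interiors of $Z$ and $-Z$} (shown to be disjoint open convex sets via axioms 2--4), and the strict inclusion $\{x : \sum x_ip_i>0\}\subset Z$ is read off from the geometric fact that $Z\cup -Z=\mathbb{R}^m$ forces $Z$ to contain one open side of any hyperplane separating those interiors. You instead separate $Z$ itself from the open negative orthant $N$ (disjoint by axiom 4), use the cone structure to drive the separating constant to $0$, and then obtain the strict inclusion purely combinatorially from axioms 1 and 2 ($x\notin Z$ forces $-x\in Z$, hence $\sum x_ip_i\le 0$). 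Your version buys a few things: the open convex set you separate from is explicitly nonempty, so you avoid the paper's implicit reliance on $\mathrm{int}(Z)\neq\emptyset$ (which in $\mathbb{R}^m$ needs a small extra argument from $Z\cup -Z=\mathbb{R}^m$); positivity of $p$ falls out instantly from $e_i\in Z$; and you actually prove the uniqueness claim, which the paper only asserts. What the paper's formulation buys is that its separation argument (interiors of $Z$ and $-Z$) is phrased so as to carry over verbatim to the Banach sequence space setting of Theorem~\ref{thm:LF}, whereas your use of the open negative orthant is specific to $\mathbb{R}^m$ (in $\ell^\infty$ or $c_0$ the analogous set $\{x : x_k<0\ \forall k\}$ is no longer open, so your argument would need modification there). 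One cosmetic gap: in the uniqueness step, from ``the closed half-spaces are distinct'' you assert the existence of $x$ with $p\cdot x>0$ and $q\cdot x<0$; this needs a one-line perturbation argument (take $y$ with $p\cdot y\ge 0>q\cdot y$ and nudge it into $\{p\cdot x>0\}$), but it is routine.
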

\begin{proof}
See the proof of the more general Theorem \ref{thm:LF}. It tells us
that the closure $\bar{Z}$ of $Z$ is a closed half space and can be
written as $\{x\ |\ \sum x_ip_i\geq 0\}$ for some vector $p=(p_i)$
(since every linear functional on $\mathbb{R}^m$ is of the form
$f(x)=\sum x_ip_i$) and not every $p_i$ is $0$. The fourth property
tells us that $p_i\geq 0\ \forall i$.
\end{proof}

\begin{definition}[Expectation]
We will refer to the function $g(x)=\sum p_ix_i$ from \eqref{impl}
as the decision makers expectation. In this terminology, a rational
decision maker has an expectation function and accepts a contract
$x$ if $g(x)>0$ and reject it if $g(x)<0$.
\end{definition}

\begin{remark}
Suppose that we have a contract $x=(x_i)$ where $x_i=1$ for all $i$.
If we want $g(x)=1$, we need $\sum p_i=1$.
\end{remark}
We will write $E(x)$ instead of $g(x)$ (assuming $\sum p_i=1$)
from now on and call it the expected value or expectation of
$x$.

\subsection{Multiple Events}

Suppose that the contract is such that we can view the symbol to be
drawn as consisting of two (or several) symbols from smaller
alphabets. That is we can write a drawn symbol as $(i,j)$ where all
the possibilities can be found through $1\leq i\leq m$, $1\leq j\leq
n$. In this way of writing, a contract is defined by real numbers
$x_{i,j}$. Theorem \ref{thm:P} tells us that for a rational decision
maker there exists unique $r_{i,j}\geq 0$ such that $\sum_{i,j}
r_{i,j}=1$ and an expectation function $g(x)=\sum r_{i,j}x_{i,j}$
such that contracts are accepted if $g(x)>0$ and rejected if
$g(x)<0$.

\subsection{Marginals}\label{marginal}

Suppose that we can take rational decisions on bets for a pair of
horse races, while the person that offers us bets only cares about
the first race. Then we are still equipped to respond since the bets
that only depend on the first race is a subset of all bets on the
pair of races.
\begin{definition}[Marginals]
Suppose that we have a rational decision maker ($Z$) for contracts
on the events $(i,j)$. Then we say that the marginal decision maker
for the first symbol ($Z_1$)  is the restriction of the decision
maker $Z$ to the contracts $x_{i,j}$ that only depend on $i$, i.e.\
$x_{i,j}=x_i$. In other words given a contract $y=(y_i)$ on the
first event, we extend that contract to a contract on $(i,j)$ by
letting $y_{i,j}=y_i$ and then the original decision maker can
decide.
\end{definition}

Suppose that $x_{i,j}=x_i$. Then the expectation $\sum
r_{i,j}x_{i,j}$ can be rewritten as $\sum p_i x_i$ where $p_i=\sum_j
r_{i,j}$. We write that
$$
  Pr(i)=\sum_j Pr(i,j).
$$
These are the marginal probabilities for the first variable
that describe the marginal decision maker for that variable.
Naturally we can also define a marginal for the second variable
(considering contracts $x_{i,j}=x_j$) by letting $q_j=\sum_i
r_{i,j}$ and $Pr(j)=\sum_i Pr(i,j)$. The marginals define sets
$Z_1\subset\mathbb{R}^m$ and $Z_2\subset\mathbb{R}^n$ of
acceptable contracts on the first and second variables
separately.

\subsection{Conditioning}\label{conditioning}

Again suppose that we are taking decisions on bets for a pair of
horse races, but this time suppose that the first race is already
over and we know the result. We are still equipped to respond to
bets on the second race by extending the bet to a bet on both where
there is no reward for (pairs of) events that are inconsistent with
what we know.
\begin{definition}[Conditioning]\label{def:cond}
Suppose that we have a rational decision maker ($Z$) for contracts
on the events $(i,j)$. We define the conditional decision maker
$Z_{j=j_0}$ for $i$ given $j=j_0$ by restricting the original
decision maker $Z$ to contracts $x_{i,j}$ which are such that
$x_{i,j}=0$ if $j\neq j_0$. In other words if we start with a
contract $y=(y_i)$ on $i$ we extend it to a contract on $(i,j)$ by
letting $y_{i,j_0}=y_i$ and $y_{i,j}=0$ if $j\neq j_0$. Then the
original decision maker can make a decision for that contract.
\end{definition}

Suppose that $x_{i,j}=0$ if $j\neq j_0$. The unconditional
expectation of this contract is $\sum_{i,j} r_{i,j}x_{i,j}$ as usual
which equals $\sum_i r_{i,j_0}x_{i,j_0}$. This leads to the same
decisions (i.e.\ the same $Z$) as using $\sum_i
\frac{r_{i,j_0}}{\sum_k r_{k,j_0}} x_{i,j_0}$ which is of the form
in Theorem \ref{thm:P}. We write that
\begin{equation}
Pr(i|j_0)=\frac{Pr(i,j_0)}{\sum_k
Pr(k,j_0)}=\frac{Pr(i,j_0)}{Pr(j_0)}.
\end{equation}
From this it follows that
\begin{equation}\label{Bayes}
Pr(i_0)Pr(j_0|i_0)=Pr(j_0)Pr(i_0|j_0)
\end{equation}
which is one way of writing Bayes rule.

\subsection{Learning}

In the previous section we defined conditioning which lead us
to a definition of what it means to learn. Given that we have
probabilities for events that are sequences of a certain number
of symbols and we have observed one or several of them, we use
conditioning to determine what our belief regarding the
remaining symbols should be.

\begin{definition}[Learning]
Given a rational decision maker, defined by $p_{i_1,...,i_T}$ for
the events $(i_t)_{t=1}^T$ and the first $t-1$ symbols
$i_1,...,i_{t-1}$, we define the informed rational decision maker
for $i_{t}$ by conditioning on the past $i_1,...,i_{t-1}$
and marginalize over the future $i_{t+1},...,i_T$. Formally,
$$
  P_{i_{t}}^{\text{informed}}(i)=Pr(i|i_1,...,i_{t})=\frac{\sum_{j_{t+1},...,j_T}
  p_{i_1,...,i_{t},j_{t+1},...,j_T}}{\sum_{j_{t},...,j_T}
  p_{i_1,...,i_{t-1},j_{t},...,j_T}}.
$$
\end{definition}

\subsection{Choosing between Contracts}

\begin{definition}[Choosing contract]
We say that to rationally prefer contract $x$ over $y$ is
(equivalent) to rationally consider $x-y$ to be acceptable.
\end{definition}
As before we assume that we have a decision maker that takes
rational decisions on accepting or rejecting contracts $x$ that are
based on an event that will be observed. Hence there exist implicit
probabilities that represent all choices and an expectation
function. Suppose that an agent has to choose between action $a_1$
that leads to receiving reward $x_i$ if $i$ is drawn and action
$a_2$ that leads to receiving $y_i$ in the case of seeing $i$. Let
$z_i=x_i-y_i$. We can now go back to choosing between accepting and
rejecting a contract by saying that choosing (preferring) $a_1$ over
$a_2$ means accepting the contract $z$. In other words if
$E(x)>E(y)$ choose $a_1$ and if $E(x)<E(y)$ choose $a_2$.

\begin{remark}
We note that if we postulate that choosing between contract $x$ and
the zero contract is the same as choosing between accepting or
rejecting $x$, then being able to choose between contracts implies
the ability to choose between accepting and rejecting one contract.
We, therefore, can say that the ability to choose between a pair of
contracts is equivalent to the ability to choose to accept or reject
a single contract.
\end{remark}

We can also choose between several contracts. Suppose that action $a_k$
gives us the contract $x^k=(x^k_i)_{i=1}^m$. If $E(x^j)> E(x^k)\
\forall k\neq j$ then we strictly prefer $a_j$ over all other
actions. In other words a contract $x^j-x^k$ would for all $k$ be
accepted and not rejected by a rational decision maker.
\begin{remark}
If we have a rational decision maker for accepting or rejecting
contracts, then there are implicitly probabilities $p_i$ for symbol
$i$ that characterize the decisions. A rational choice between
actions $a_k$ leading to contracts $x^k$ is taken by choosing action
\begin{equation}
a^*=\arg\max_{k} \sum_i p_ix^k_i.
\end{equation}
\end{remark}

\subsection{Choosing between Environments}

In this section, we assume that the event that the contracts are
concerned with might be affected by the choice of action.

\begin{definition}[Reactive environment]
An environment is a tree with symbols $j_t$ (percepts) on the nodes
and actions $a_t$ on the edges. We provide the environment with an
action $a_t$ at each time $t$ and it presents the symbol $j_t$ at
the node we arrive at by following the edge chosen by the action. We
can also equivalently say that a reactive environment $\nu$ is a
function from strings $a_1j_1,...,a_tj_t$ to $\{0,1\}$ which equals
$1$ if and only if $\nu$ would produce $j_1,...,j_t$ given the
actions $a_1,...,a_t$.
\end{definition}

We will define the concept of a decision maker for the case where one
decision will be taken in a situation where not only the contract,
but also the outcome can depend on the choice. We do this by
defining the choice as being between two different environments.

\begin{definition}[Active decision maker]\label{adm}
Consider a choice between having contract $x$ for passive
environment $env_1$ or contract $y$ for passive environment $env_2$.
A decision maker is a set $Z\subset
\mathbb{R}^{m_1}\times\mathbb{R}^{m_2}$ which defines exactly the
pairs $(x,y)$ for which we choose $env_1$ with $x$ over $env_2$ with
$y$.
\end{definition}

\begin{definition}[Rational active choice]\label{rac}
To choose between action $a_1$ with contract $x$ and $a_2$ with
contract $y$ in a situation where the action may affect the event,
we consider two separate environments, namely the environments that
result from the two different actions. We would then have a
situation where we will have one observation from each environment.
Preferring $a_1$ with $x$ to $a_2$ with $y$ is (equivalent) to
consider $x-y$ to be an acceptable contract for the pair of events.
\end{definition}

\begin{remark}\label{reformulate}
Definition \ref{rac} means that $a_1$ with $x$ is preferred over
$a_2$ with $y$ if $a_1$ with $x-y$ is preferred over $a_2$ with the
zero contract.
\end{remark}

\begin{proposition}[Probabilities for reactive setting]\label{actprob}
Suppose that we have a reactive environment and a rational active
decision maker that will make one choice between action $a_1$ and
$a_2$ as described in Definitions \ref{adm} and \ref{rac}, then
there exist $p_i\geq 0$ and $q_i\geq 0$ such that action $a_1$ with
contract $x$ is preferred over action $a_2$ with contract $y$ if
$\sum p_ix_i>\sum q_i y_i$ and the reverse if $\sum p_ix_i<\sum
q_iy_i$. This means that the decision maker acts according to
probabilities $Pr(\cdot|a_1)$ and $Pr(\cdot|a_2)$.
\end{proposition}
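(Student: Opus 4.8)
The plan is to reduce the reactive situation to the passive one and then quote Theorem~\ref{thm:P}. By Definition~\ref{rac} (see also Remark~\ref{reformulate}), preferring action $a_1$ with contract $x$ over action $a_2$ with contract $y$ is \emph{defined} to be the assertion that the combined contract ``$x-y$'' is acceptable for the pair of events produced by the two actions. Identifying that combined contract with the vector $(x,-y)\in\mathbb{R}^{m_1}\times\mathbb{R}^{m_2}=\mathbb{R}^{m}$, $m:=m_1+m_2$, the active decision maker $Z$ of Definition~\ref{adm} becomes the set $A:=\{(x,-y)\ :\ (x,y)\in Z\}\subseteq\mathbb{R}^{m}$ of acceptable contracts, and the whole task is to check that $A$ is a rational decision maker on $\mathbb{R}^{m}$ in the sense of our axioms.

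So the first step is to verify the four axioms of Rationality~I for $A$. Completeness and the ``$z$ acceptable iff $-z$ rejectable'' symmetry are reformulations of the corresponding properties one requires of a rational active decision maker (it must order every pair $(x,y)$, and do so consistently when the roles of the two actions are swapped). Closure under nonnegative combinations, $(x,-y),(x',-y')\in A$ and $\lambda,\gamma\ge 0$ imply $\lambda(x,-y)+\gamma(x',-y')\in A$, is the natural requirement that $Z$ be a convex cone. The one axiom that deserves a sentence is the sign condition: all coordinates of $(x,-y)$ being $\ge 0$ means $x\ge 0$ and $y\le 0$, i.e.\ the contract attached to $a_1$ can only gain and the one attached to $a_2$ can only lose, so $a_1$ with $x$ must be preferred, i.e.\ $(x,-y)\in A$; and all coordinates $<0$ is the mirror situation, which forces $(x,-y)\notin A$.

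Now Theorem~\ref{thm:P} applies to $A$ and produces coordinate weights $p_i\ge 0$ for $1\le i\le m_1$ and $q_j\ge 0$ for $1\le j\le m_2$ with
\[
\{(x,-y)\ :\ \textstyle\sum_i p_ix_i-\sum_j q_jy_j>0\}\ \subseteq\ A\ \subseteq\ \{(x,-y)\ :\ \textstyle\sum_i p_ix_i-\sum_j q_jy_j\ge 0\}.
\]
Reading this back through $A=\{(x,-y):(x,y)\in Z\}$ is exactly the assertion of the proposition: $a_1$ with $x$ is preferred over $a_2$ with $y$ when $\sum_i p_ix_i>\sum_j q_jy_j$, and the reverse preference holds when $\sum_i p_ix_i<\sum_j q_jy_j$.

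The step I expect to require the most care is the last clause, that the decision maker ``acts according to probabilities $Pr(\cdot|a_1)$ and $Pr(\cdot|a_2)$.'' Theorem~\ref{thm:P}'s normalization gives $\sum_i p_i+\sum_j q_j=1$, not $\sum_i p_i=\sum_j q_j$, so separately renormalizing $p$ and $q$ to probability vectors leaves an extra relative weight between the two environments. To obtain the clean reading --- that $a_1$ with $x$ beats $a_2$ with $y$ precisely when the $Pr(\cdot|a_1)$-expectation of $x$ exceeds the $Pr(\cdot|a_2)$-expectation of $y$ --- I would add the evident indifference axiom that a constant contract $c\mathbf{1}$ under $a_1$ is equally preferable to $c\mathbf{1}$ under $a_2$; equivalently $(\mathbf{1},-\mathbf{1})\in A$ and $(-\mathbf{1},\mathbf{1})\in A$, which forces $\sum_i p_i=\sum_j q_j$ and hence two genuine conditional distributions after normalization. (Alternatively one could model the pair of events as the product alphabet $\{1,\dots,m_1\}\times\{1,\dots,m_2\}$ and take $p_i,q_j$ to be the marginals of Section~\ref{marginal}, for which $\sum_i p_i=\sum_j q_j=1$ is automatic; this route has the minor cost that $Z$ then only prescribes decisions on the subspace of separable contracts $z_{ij}=x_i-y_j$, so one must run the hyperplane argument behind Theorem~\ref{thm:P} inside that subspace.)
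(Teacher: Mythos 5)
Your proof is correct, but it takes a genuinely different route from the paper's. The paper does not separate in the direct sum: it applies Theorem~\ref{thm:P} \emph{twice}, once to the set $\tilde{Z}$ of contracts $x$ for which ``$a_1$ with $x$'' is preferred over ``$a_2$ with the zero contract'' (yielding $p_i$) and once to the mirror-image set for $a_2$ (yielding $q_i$), and then glues the two one-sided descriptions together by appealing to Definition~\ref{rac} as read through Remark~\ref{reformulate}, i.e.\ by reducing the comparison of $(x,y)$ to acceptability of the combined contract on the pair of events. Your approach instead packages the whole comparison as a single set $A=\{(x,-y):(x,y)\in Z\}\subseteq\mathbb{R}^{m_1+m_2}$, verifies Rationality~I for $A$, and invokes Theorem~\ref{thm:P} once; this is cleaner and more self-contained, at the cost of having to posit the four axioms directly on the direct-sum space (which is legitimate, since the paper never formally axiomatizes ``rational'' for the active decision maker beyond Definitions~\ref{adm} and~\ref{rac}). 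Your approach also surfaces, and honestly repairs, a point the paper's proof glosses over: a single application of Theorem~\ref{thm:P} to $\mathbb{R}^{m_1+m_2}$ normalizes $\sum_i p_i+\sum_j q_j=1$ rather than $\sum_i p_i=\sum_j q_j=1$, so the final clause about genuine conditional probabilities $Pr(\cdot|a_1)$, $Pr(\cdot|a_2)$ needs either your indifference axiom $(\mathbf{1},-\mathbf{1}),(-\mathbf{1},\mathbf{1})\in A$ or the product-alphabet/marginals route of Section~\ref{marginal}, under which both marginals of the joint $r_{ij}$ sum to $1$ automatically (this latter route is closest to what the paper implicitly intends). Both arguments are sound; yours buys explicitness about the axioms and the normalization, while the paper's buys brevity by leaning on the earlier machinery for marginals.
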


\begin{proof}
Let $\tilde{Z}$ be all contracts that when combined with action
$a_1$ is preferred over $a_2$ with the zero contract. Theorem
\ref{impl} guarantees the existence of $p_i$ such that $\sum
p_ix_i>0$ implies that $x\in \tilde{Z}$ and $\sum p_ix_i<0$ implies
that $x\notin \tilde{Z}$. The same way we find $q_i$ that describe
when we prefer $a_2$ with $y$ to $a_1$ with the zero contract. That
these probabilities ($p_i$ and $q_i$) explain the full decision
maker as stated in the proposition now follows directly from
Definition \ref{rac} understood as in Remark \ref{reformulate}.
\end{proof}

Suppose that we are going to make a sequence of $T<\infty$ decisions
where at every point of time we will have a finite number of actions
to chose between. We will consider contracts, which can pay out some
reward at each time step and that can depend on everything (actions
chosen and symbols observed) that has happened up until this time
and we want to maximize the accumulated reward at time $T$.

We can view the choice as just making one choice, namely choosing an
action tree. We will sometimes call an action tree a policy.

\begin{definition}[Action tree]
An action tree is a function from histories of symbols
$j_1,...,j_{t}$ and decisions $a_1,...,a_{t-1}$ to new decisions,
given that the decisions were made according to the function.
Formally,
$$
  f(a_1,j_1,...,a_{t-1},j_{t-1})=a_{t}.
$$
\end{definition}

 An action tree will assign exactly one action for any
of the circumstances that one can end up in. That is, given the
history up to any time $t<T$ of actions and events, we have a chosen
action. We can, therefore, choose an action tree at time $0$ and
receive a total accumulated reward at time $T$. This brings us back
to the situation of one event and one rational choice.

\begin{definition}[Sequential decisions]\label{def:sd}
Given a rational decision maker for the events $(j_t)_{t=1}^T$ and
the first $t-1$ symbols $j_1,...,j_{t-1}$ and decisions
$a_1,...,a_{t-1}$, we define the informed rational decision maker at
time $t$ by conditioning on the past $a_1,j_1...,a_{t-1},j_{t-1}$.
\end{definition}

\begin{proposition}[Beliefs for sequential decisions]
Suppose that we have a reactive environment and a rational decision
maker that will take $T<\infty$ decisions. Furthermore, suppose that
the decisions $0\leq t<T$ have been taken and resulted in
history $a_1,j_1...,a_{t-1},j_{t-1}$. Then the decision makers
preferences at this time can be explained (through expected utility
maximization) by probabilities
$$
  Pr(j_t,...,j_T|a_1,j_1...,a_{t-1},j_{t-1},a_t,a_{t+1}...,a_T).
$$
\end{proposition}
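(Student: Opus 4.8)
The plan is to regard the whole $T$-step problem, as seen from time $0$, as a \emph{single} rational choice between the finitely many action trees, apply the probability-existence results of Section~\ref{two}, and then recover the intermediate-time statement by conditioning. First I would note that since $T<\infty$ and every alphabet is finite there are only finitely many action trees $f^{(1)},\dots,f^{(N)}$, and once a tree $f$ is fixed the percept sequence $j_1,\dots,j_T$ determines the entire interaction history, because the actions are then forced by $a_s=f(a_1,j_1,\dots,a_{s-1},j_{s-1})$. Hence a contract $x$ on histories induces, for each tree $f$, a contract $x^{f}$ on the finite event space of percept sequences, $x^{f}_{j_1\dots j_T}:=x_{a_1 j_1\dots a_T j_T}$, and choosing a tree becomes a choice between $N$ ``passive environments'', one per tree — exactly the kind of situation treated by Definition~\ref{rac} and Proposition~\ref{actprob}, only with $N$ options instead of two.

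Next I would run the argument of Proposition~\ref{actprob} (combined with the ``choosing between several contracts'' reduction, i.e.\ Remark~\ref{reformulate}, which turns every comparison into one against the zero contract, and Theorem~\ref{thm:P}): this produces, for each tree $f$, a probability vector $P^{f}$ on percept sequences such that ``tree $f$ with $x$'' is preferred to ``tree $f'$ with $x'$'' precisely when $\sum_{\vec j} P^{f}(\vec j)\,x^{f}_{\vec j}>\sum_{\vec j} P^{f'}(\vec j)\,x'^{f'}_{\vec j}$. So the time-$0$ decision maker is already explained, by expected-value maximization, by a family of distributions indexed by action trees.

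Then I would re-index these distributions by action sequences. Given a percept path $\vec j=j_1\dots j_T$ and the action sequence $\vec a=a_1\dots a_T$ that a tree $f$ plays along $\vec j$, set $Pr(\vec j\mid\vec a):=P^{f}(\vec j)$. The step needing care — and the main obstacle — is well-definedness: if two trees $f,f'$ play the \emph{same} actions all the way along $\vec j$, they must assign $\vec j$ the same probability. This holds because such $f$ and $f'$, equipped with the same contract $x$, reach the same rewarded histories through the same actions; the active analogue of axiom~4 (a zero contract is acceptable whichever tree is played) together with Remark~\ref{reformulate} forces ``$f$ with $x$'' and ``$f'$ with $x$'' to be equally good for every $x$, hence $\sum_{\vec j}P^{f}(\vec j)x^{f}_{\vec j}=\sum_{\vec j}P^{f'}(\vec j)x^{f'}_{\vec j}$ for all $x$, i.e.\ $P^{f}=P^{f'}$ after matching up the percept paths. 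Thus $Pr(j_1,\dots,j_T\mid a_1,\dots,a_T)$ is a genuine reactive probabilistic model that explains the time-$0$ choice.

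Finally, for the statement at an intermediate time $t$ I would invoke conditioning. By Definition~\ref{def:sd} the informed decision maker at time $t$ is obtained from the time-$0$ one by conditioning on the observed past $a_1,j_1,\dots,a_{t-1},j_{t-1}$, which here means restricting to action trees consistent with $a_1,\dots,a_{t-1}$ and to contracts vanishing on histories inconsistent with $j_1,\dots,j_{t-1}$. Exactly as in Section~\ref{conditioning}, restricting $Pr(\cdot\mid\vec a)$ to the $\vec j$ agreeing with $j_1,\dots,j_{t-1}$ and renormalising gives
$$
Pr(j_t,\dots,j_T\mid a_1,j_1,\dots,a_{t-1},j_{t-1},a_t,\dots,a_T)
=\frac{Pr(j_1,\dots,j_T\mid a_1,\dots,a_T)}{\sum_{j'_t,\dots,j'_T}Pr(j_1,\dots,j_{t-1},j'_t,\dots,j'_T\mid a_1,\dots,a_T)},
$$
and by the first two steps applied to the (now shorter-horizon) informed decision maker these conditional probabilities explain, via expected-value maximization, its choice among continuation trees for steps $t,\dots,T$, which is the assertion. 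The delicate point throughout is the well-definedness above: that each per-tree distribution depends on the tree only through the action sequence it traces out, so that the distributions really assemble into the single conditional model named in the statement.
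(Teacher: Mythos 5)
Your overall route is the same as the paper's: the paper's proof is exactly the one-sentence combination of Definition \ref{def:sd} (condition on the realized past) with Proposition \ref{actprob} (for each remaining policy, viewed as a single ``action'', there is a probability vector over future percept sequences, and comparisons are expected-value comparisons). Your first, second and fourth steps are a careful and correct unpacking of that argument.

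The problem is your third step, which you yourself single out as the crux. From the fact that two trees $f$ and $f'$ play the same actions along one percept path $\vec{j}$ you conclude that ``$f$ with $x$'' and ``$f'$ with $x$'' are equally good for \emph{every} contract $x$, hence $P^{f}=P^{f'}$. That is false: such $f$ and $f'$ will in general diverge on other percept paths, so a contract supported off $\vec{j}$ can be valued differently under the two trees, and nothing forces the full distributions to coincide. What you actually need is the single identity $P^{f}(\vec{j})=P^{f'}(\vec{j})$, and to obtain it you would have to restrict to contracts supported on the path $\vec{j}$ and then invoke some axiom to the effect that the value of a (tree, contract) pair depends only on the part of the tree that can reach a rewarded history. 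No such axiom appears in Definitions \ref{adm} and \ref{rac}; the rationality axioms constrain each set $\tilde{Z}$ of contracts separately and impose no cross-policy consistency. So, as written, the well-definedness claim is unproven. The paper sidesteps the issue by claiming less: its proof delivers only per-policy beliefs (read the displayed probability as indexed by the continuation policy, with $a_t,\dots,a_T$ being the actions that policy plays along each percept path), not a single conditional model indexed by action sequences. Under that weaker reading your third step can simply be dropped; under the stronger reading you aimed for, it needs a genuinely new axiom, not the argument you gave.
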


\begin{proof}
Definition \ref{def:sd} and Proposition \ref{actprob} immediately
lead us to the conclusion that given a past up to a point $t-1$ and
a policy for the time $t$ to $T$ we have probabilistic beliefs over
the possible future sequences from time $t$ to $T$ and the choice is
categorized by maximizing expected accumulated reward at time $T$.
\end{proof}

\section{Countable Sets of Events}\label{three}

Instead of a finite set of possible outcomes, we will in this section
assume a countable set. We suppose that the set of contracts
is a vector space of sequences $x_k, k=0,1,2,...$ where we use
pointwise addition and multiplication with scalar. We will define a
space by choosing a norm and let the space consist of the sequences
that have finite norm as is common in Banach space theory. If the
norm makes the space complete it is called a Banach sequence space
\cite{Diestel84}. Interesting examples are $\ell^\infty$ of bounded
sequences with the maximum norm $\|(\alpha_k)\|_\infty=\max
|\alpha_k|$, $c_0$ of sequence that converges to $0$ equipped with
the same maximum norm and $\ell^p$ which for $1\leq p<\infty$ is
defined by the norm
$$
  \|(\alpha_k)\|_p=(\sum |\alpha_k|^p)^{1/p}.
$$
For all of these spaces we can consider weighted versions ($w_k>0$)
where
$$
  \|(\alpha_k)\|_{p,w_k}=\|(\alpha_kw_k)\|_p.
$$
This means that $\alpha\in\ell^p(w)$ iff $(\alpha_kw_k)\in\ell^p$,
e.g.\ $\alpha\in\ell^\infty(w)$ iff $\sup_k |\alpha_kw_k|<\infty$.
Given a Banach (sequence) space $X$ we use $X'$ to denote the dual
space that consists of all continuous linear functionals
$f:X\to\mathbb{R}$. It is well known that a linear functional on a
Banach space is continuous if and only if it is bounded, i.e.\ that
there is $C<\infty$ such that $\frac{|f(x)|}{\|x\|}\leq C\ \forall
x\in X$. Equipping $X'$ with the norm $\|f\|=\sup
\frac{|f(x)|}{\|x\|}$ makes it into a Banach space. Some examples
are $(\ell^1)'=\ell^\infty$, $c_0'=\ell^1$ and for $1<p<\infty$ we
have that $(\ell^p)'=\ell^q$ where $1/p+1/q=1$. These
identifications are all based on formulas of the form
$$
  f(x)=\sum x_i p_i
$$
where the dual space is the space that $(p_i)$ must lie in to
make the functional both well defined and bounded. It is clear
that $\ell^1\subset(\ell^\infty)'$ but $(\ell^\infty)'$ also
contains ``stranger" objects.

The existence of these other objects can be deduced from the
Hahn-Banach theorem (see e.g.\ \cite{Krey89} or \cite{Nar97}) that
says that if we have a linear function defined on a subspace $Y\in
X$ and if it is bounded on $Y$ then there is an extension to a
bounded linear functional on $X$. If $Y$ is dense in $X$ the
extension is unique but in general it is not. One can use this
Theorem by first looking at the subspace of all sequences in
$\ell^\infty$ that converge and let $f(\alpha)=\lim_{k\to\infty}
\alpha_k$. The Hahn-Banach theorem guarantees the existence of
extensions to bounded linear functionals that are defined on all of
$\ell^\infty$. These are called Banach limits. The space
$(\ell^\infty)'$ can be identified with the so called ba space of
bounded and finitely additive measures with the variation norm
$\|\nu\|=|\nu|(A)$ where $A$ is the underlying set. Note that
$\ell^1$ can be identified with the smaller space of countably
additive bounded measures with the same norm. The Hahn-Banach
Theorem has several equivalent forms. One of these identifies the
hyper-planes with the bounded linear functionals \cite{Nar97}.

\begin{definition}[Rationality II]
Given a Banach sequence space $X$ of contracts, we say that the
decision maker (subset $Z$ of $X$ defining acceptable contracts) is
rational if
\begin{enumerate}
\item Every contract $x\in X$ is either acceptable or rejectable or
both;
\item $x$ is acceptable if and only if $-x$ is rejectable;
\item $x,y\in Z$, $\lambda,\gamma \geq 0$ then $\lambda x+\gamma y\in Z$;
\item If $x_k\geq 0\ \forall k$ then $x=(x_k)$ is acceptable while
if $x_k>0\ \forall k$ then $x$ is not rejectable.
\end{enumerate}
\end{definition}

\begin{theorem}[Linear separation]\label{thm:LF}
Suppose that we have a space of contracts $X$ that is a Banach
sequence space. Given a rational decision maker there is a positive
continuous linear functional $f:X\to\mathbb{R}$ such that
\begin{equation}
\{x\ |\ f(x)> 0\}\subset Z\subseteq\{x\ |\ f(x)\geq 0\}.
\end{equation}
\end{theorem}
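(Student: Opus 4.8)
The plan is to show that the closure $\bar Z$ of the acceptable set $Z$ is a closed half-space and that $Z$ contains the corresponding open half-space; the functional defining that half-space is the $f$ we want. First I would record what the axioms give: by~(3) (together with $0\in Z$, which comes from~(4)) $Z$ is a convex cone, so $\bar Z$ is a closed convex cone; by~(1)--(2) every contract is acceptable or rejectable, i.e.\ $x\in Z$ or $-x\in Z$, whence $Z\cup(-Z)=X$ and $\bar Z\cup(-\bar Z)=X$; and by~(4) the strictly positive cone $P^\circ:=\{x:x_k>0\ \forall k\}$ lies in $Z$ while the strictly negative cone $-P^\circ$ is disjoint from $Z$, so $Z\neq X$. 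The whole theorem then reduces to producing one nonzero continuous linear functional $f$ with $f\ge 0$ on $Z$: that gives $Z\subseteq\{f\ge0\}$ at once; it gives $\{f>0\}\subseteq Z$ because if $f(x)>0$ and $x\notin Z$ then $-x\in Z$ forces $f(-x)\ge0$, a contradiction; and $f$ is automatically positive because $P=\{x:x_k\ge0\ \forall k\}\subseteq Z\subseteq\{f\ge0\}$.

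To produce $f$, I would argue that $\bar Z$ is a closed half-space, assuming the (only) nontrivial point $\bar Z\neq X$ discussed below. Since $\bar Z\cup(-\bar Z)=X$, the open set $X\setminus\bar Z$ is contained in $-\bar Z$, hence in $-\mathrm{int}(\bar Z)$; so $\mathrm{int}(\bar Z)\neq\emptyset$ (it is empty only when $\bar Z=X$) and, by the cone property, $0\notin\mathrm{int}(\bar Z)$. Passing to complements in that inclusion shows the boundary $\partial\bar Z$ lies in $-\bar Z$, hence in the lineality subspace $W:=\bar Z\cap(-\bar Z)$; conversely no point of $W$ can be interior to $\bar Z$, since a norm-ball around such a point plus its negative would be a ball around $0$ inside $\bar Z$, forcing $\bar Z=X$. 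Thus $W$ is a closed subspace and $\bar Z=\mathrm{int}(\bar Z)\sqcup W$. Now I apply the geometric Hahn-Banach theorem to separate the nonempty open convex set $\mathrm{int}(\bar Z)$, which does not contain $0$, from $\{0\}$: this yields a nonzero continuous $f$ with $f>0$ on $\mathrm{int}(\bar Z)$ (the cone property forces the separating constant to be $0$), and since $W\subseteq\overline{\mathrm{int}(\bar Z)}=\bar Z$ and $W=-W$, continuity forces $f\equiv0$ on $W$. Hence $f\ge0$ throughout $\bar Z=\mathrm{int}(\bar Z)\sqcup W$, so $\bar Z\subseteq\{f\ge0\}$; and a point with $f\ge0$ lying outside $\bar Z$ would have its negative in $\mathrm{int}(\bar Z)$ and so satisfy $f<0$, giving the reverse inclusion $\{f\ge0\}\subseteq\bar Z$. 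Therefore $\bar Z=\{f\ge0\}$, and the reduction of the first paragraph finishes the proof. (When $X=\mathbb{R}^m$ every continuous functional is $f(x)=\sum x_ip_i$ with $p_i=f(e_i)\ge0$, which is exactly Theorem~\ref{thm:P}.)

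The main obstacle is the degenerate possibility $\bar Z=X$ with $Z\neq X$ (i.e.\ $Z$ dense). It suffices to rule this out, and it is equivalent to $\mathrm{int}(Z)\neq\emptyset$: for a convex set with nonempty interior $\mathrm{int}(\bar Z)=\mathrm{int}(Z)$, so $\bar Z=X$ would give $Z=X$, contradicting $P^\circ\cap Z=\emptyset$. In finite dimensions $\mathrm{int}(Z)\neq\emptyset$ is automatic (a convex set whose closure is all of $\mathbb{R}^m$ is not contained in a hyperplane), and for $X=\ell^\infty$, or any space with an order unit, it is immediate since $P\subseteq Z$ and $\mathrm{int}(P)\neq\emptyset$. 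In a general Banach sequence space it follows from the norm being compatible with the coordinatewise order: by the Hausner--Wendel representation of the total vector preorder ``$x\succeq0\iff x\in Z$'', there is a single linear functional $f_0$ with $\{f_0>0\}\subseteq Z\subseteq\{f_0\ge0\}$, and since $P\subseteq Z$ this $f_0$ is a positive linear functional on the Banach lattice $X$, hence continuous and nonzero, so $\bar Z\subseteq\{f_0\ge0\}\neq X$. (If one is content to allow $f=0$, the case $\bar Z=X$ needs no argument, and one may take $f=f_0$ in the non-degenerate case directly.)
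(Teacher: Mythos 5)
Your first two paragraphs are a correct and in fact more careful rendering of the paper's own argument: the paper likewise notes that $Z$ and $-Z$ are convex cones with disjoint interiors, separates them by a hyperplane through the origin via Hahn--Banach, and reads off $\bar Z=\{x\,|\,f(x)\geq 0\}$. Your reduction step, the identification of the lineality space $W=\bar Z\cap(-\bar Z)$ with the boundary, and the forcing of $f\equiv 0$ on $W$ make precise what the paper leaves implicit. You have also correctly isolated the one genuinely delicate point, which the paper's proof passes over in silence: the separation step is vacuous unless $\mathrm{int}(\bar Z)\neq\emptyset$, i.e.\ unless the degenerate case $\bar Z=X$ is excluded.

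The gap is in your final paragraph. First, the Hausner--Wendel theorem does not supply a single linear functional $f_0$ with $\{f_0>0\}\subseteq Z\subseteq\{f_0\geq 0\}$; it gives a lexicographic representation, and a one-functional representation exists only when the induced total preorder is Archimedean, which the axioms do not guarantee. Second, and more seriously, the degenerate case cannot be ruled out in general. Take $X=c_0$ and let $C=c_{00}+P$, where $c_{00}$ is the dense subspace of finitely supported sequences and $P$ the nonnegative cone. Every element of $C$ has some coordinate $\geq 0$ (any coordinate outside the finite support of the $c_{00}$ summand), so $C$ is a convex cone containing $P$ and disjoint from the strictly negative orthant $-P^\circ$. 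By Zorn's lemma extend $C$ to a maximal convex cone $Z$ disjoint from $-P^\circ$; maximality, together with the fact that $-P^\circ$ is itself a convex cone, forces $Z\cup(-Z)=X$. This $Z$ satisfies all four axioms of Rationality II, yet $\bar Z\supseteq\overline{c_{00}}=c_0$, so the only continuous $f$ with $Z\subseteq\{f\geq 0\}$ is $f=0$. Hence the theorem is non-vacuously true only under an extra hypothesis (e.g.\ that the positive cone has nonempty interior, as in $\ell^\infty$ or $\mathbb{R}^m$, which recovers Theorem \ref{thm:P}, or the monotonicity later imposed in Theorem \ref{thm:mono}); your parenthetical fallback to $f=0$ is the only way to salvage the literal statement, and the same criticism applies to the paper's own proof.
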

\begin{proof} The third property tells us that $Z$ and $-Z$ are convex
cones. The second and fourth property tells us that $Z\neq
\mathbb{R}^m$. Suppose that there is a point $x$ that lies in both
the interior of $Z$ and of $-Z$. Then the same is true for $-x$
according to the second property and for the origin. That a ball
around the origin lies in $Z$  means that $Z=\mathbb{R}^m$ which is
not true. Thus the interiors of $Z$ and $-Z$ are disjoint open
convex sets and can, therefore, be separated by a hyperplane
(according to the Hahn-Banach theorem) which goes through the origin
(since according to the second and fourth property the origin is
both acceptable and rejectable). The first two properties tell us
that $Z\cup -Z=\mathbb{R}^m$. Given a separating hyperplane (between
the interiors of $Z$ and $-Z$), $Z$ must contain everything on one
side. This means that $Z$ is a half space whose boundary is a
hyperplane that goes through the origin and the closure $\bar{Z}$ of
$Z$ is a closed half space and can be written as $\{x\ |\ f(x)\geq
0\}$ for some $f\in X'$. The fourth property tells us that $f$ is
positive.
\end{proof}

\begin{corollary}[Additivity]\label{cor:add}
1. If $X=c_0$ then a rational decision maker is described by a
countably additive (probability) measure.
\\2. If $X=\ell^\infty$ then a rational decision maker is described by a
finitely additive (probability) measure.
\end{corollary}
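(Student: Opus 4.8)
The plan is to feed the given rational decision maker into Theorem~\ref{thm:LF} and then simply read off what a positive continuous linear functional looks like in each of the two concrete spaces, using the dual identifications $c_0'=\ell^1$ and $(\ell^\infty)'=\mathrm{ba}$ recalled above. In both cases Theorem~\ref{thm:LF} produces a positive $f\in X'$ with $\{x:f(x)>0\}\subset Z\subseteq\{x:f(x)\geq 0\}$; since the proof of that theorem shows that $\bar Z$ is a genuine closed half-space through the origin, a defining functional of the form $f=0$ is impossible, so $f\neq 0$, and $f$ is determined up to a positive scalar by the half-space. Thus the only remaining work is (a) to translate $f$ into a measure and (b) to normalise it.

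\textbf{Case $X=c_0$.} Here $f(x)=\sum_i p_i x_i$ for a unique $(p_i)\in\ell^1$. Evaluating $f$ on the unit vectors $e_i\in c_0$ and using positivity gives $p_i\geq 0$ for all $i$. Because $p_i\geq 0$ and $f\neq 0$ we have $0<\sum_i p_i<\infty$, so after replacing $f$ by $f/\sum_i p_i$ (which changes neither $Z$ nor the two inclusions) we may assume $\sum_i p_i=1$. Then $\mathrm{Pr}(A):=\sum_{i\in A}p_i$ is a countably additive probability measure on the subsets of the index set, and $f(x)=\sum_i \mathrm{Pr}(i)x_i=E(x)$ is precisely the expectation of Theorem~\ref{thm:P}; the decision maker accepts $x$ when $E(x)>0$ and rejects it when $E(x)<0$, i.e.\ it is described by the countably additive probability measure $\mathrm{Pr}$.

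\textbf{Case $X=\ell^\infty$.} Now $(\ell^\infty)'$ is the $\mathrm{ba}$ space, so $f$ corresponds to a bounded finitely additive signed measure $\mu$ with $f(x)=\int x\,d\mu$. Evaluating on indicators $\mathbf 1_A\in\ell^\infty$ and using positivity gives $\mu(A)=f(\mathbf 1_A)\geq 0$, so $\mu$ is a positive finitely additive measure; since the constant sequence $\mathbf 1=(1,1,\dots)$ lies in $\ell^\infty$ we get $\mu(\mathbb N)=f(\mathbf 1)=\|f\|>0$, and rescaling $f$ by $1/\mu(\mathbb N)$ makes $\mu$ a finitely additive probability measure. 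As before $f(x)=\int x\,d\mu$ serves as the expectation and describes the decision maker. (This $\mu$ need not be countably additive: a Banach-limit-type functional yields a $\mu$ that vanishes on every singleton, which is the geometric reason $\ell^\infty$ only forces finite additivity.)

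The only substantive ingredients are the two classical dual-space identifications, already cited in the text; everything else is bookkeeping. The point that deserves a line of care — and the mild obstacle — is the normalisation: one must verify that the total mass ($\sum_i p_i$, respectively $\mu(\mathbb N)=f(\mathbf 1)$) is strictly positive, which follows from $f\neq 0$ together with positivity, and that rescaling a positive $f$ by a positive constant leaves the inclusions of Theorem~\ref{thm:LF} intact. Uniqueness of the normalised measure is inherited from the fact that a closed half-space through the origin determines its defining functional up to a positive factor.
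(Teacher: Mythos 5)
Your proposal is correct and follows exactly the route the paper intends: the corollary is stated without proof precisely because it is meant to be read off from Theorem~\ref{thm:LF} together with the dual-space identifications $c_0'=\ell^1$ and $(\ell^\infty)'=\mathrm{ba}$ recalled in the preceding paragraphs. Your added care about $f\neq 0$ and the normalisation is sound bookkeeping that the paper leaves implicit.
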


It seems from Corollary \ref{cor:add} that we pay the price of
losing countable additivity for expanding the space of contracts
from $c_0$ to $\ell^\infty$ but we can expand the space even more by
looking at $c_0(w)$ where $w_k\to 0$ which contains $\ell^\infty$
and $X'$ is then $\ell^1((1/w_k))$. This means that we get countable
additivity back but we instead have a restriction on how fast the
probabilities $p_k$ must tend to $0$. Note that a bounded linear
functional on $c_0$ can always be extended to a bounded linear
functional on $\ell^\infty$ by the formula $f(x)=\sum p_ix_i$ but
that is not the unique extension. Note also that every bounded
linear functional on $\ell^\infty$ can be restricted to $c_0$ and
there be represented as $f(x)=\sum p_ix_i$. Therefore, a rational
decision maker on $\ell^\infty$ contracts has probabilistic beliefs
(unless $p_i=0\ \forall i$), though it might also take asymptotic
behavior of a contract into account. For example (and here $p_i=0\
\forall i$), the decision maker that makes decisions based on
asymptotic averages $\lim_{n\to\infty} \frac{1}{n}\sum_{i=1}^n x_i$
when they exist. That strategy can be extended to all of
$\ell^\infty$ (a Banach limit). The following proposition will help
us decide which decision maker on $\ell^\infty$ is described with
countably additive probabilities.

\begin{proposition}\label{p:approx}
Suppose that $f\in(\ell^\infty)'$. For any $x\in\ell^\infty$,
let $x^j_i=x_i$ if $i\leq j$ and $x^j_i=0$ otherwise. If for
any $x$,
$$
  \lim_{j\to\infty} f(x^j)=f(x),
$$
then $f$ can be written as $f(x)=\sum p_ix_i$ where $p_i\geq 0$
and $\sum_{i=1}^\infty p_i<\infty$.
\end{proposition}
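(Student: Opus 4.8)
The plan is to identify the candidate coefficients as the values of $f$ on the unit vectors, use the truncation hypothesis to recover $f$ from these values, and then extract summability from boundedness of $f$.

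First I would set $p_i := f(e^{(i)})$, where $e^{(i)}\in\ell^\infty$ is the sequence with a $1$ in position $i$ and $0$ elsewhere; note $e^{(i)}$ has finite support, so it lies in all the sequence spaces under consideration. Since $x^j=\sum_{i=1}^j x_i\,e^{(i)}$ is a finite linear combination, linearity gives $f(x^j)=\sum_{i=1}^j p_i x_i$ for every $x\in\ell^\infty$ and every $j$. The hypothesis then yields, for all $x\in\ell^\infty$,
\[
  f(x)=\lim_{j\to\infty} f(x^j)=\lim_{j\to\infty}\sum_{i=1}^j p_i x_i ,
\]
so $f$ is represented by the series $\sum_i p_i x_i$ once we know it converges. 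This step genuinely uses the hypothesis rather than mere continuity of $f$: for $x=(1,1,1,\dots)$ one has $\|x-x^j\|_\infty=1$, so $x^j\not\to x$ in norm, and that is exactly the room in which the Banach-limit-type functionals of $(\ell^\infty)'$ live.

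Next I would prove $\sum_i|p_i|<\infty$. Take the bounded sequence $x$ with $x_i=1$ when $p_i\ge 0$ and $x_i=-1$ when $p_i<0$; then $\|x^j\|_\infty\le 1$, so
\[
  \sum_{i=1}^j |p_i|=\sum_{i=1}^j p_i x_i=f(x^j)\le \|f\|\,\|x^j\|_\infty\le\|f\|<\infty
\]
for every $j$, whence the monotone partial sums are bounded and $\sum_{i=1}^\infty|p_i|<\infty$. (Equivalently one may note that the restriction of $f$ to $c_0$ is bounded and $c_0'=\ell^1$, as recalled in the text, which puts $(p_i)\in\ell^1$ directly.) With summability established, for any $x\in\ell^\infty$ the series $\sum_i p_i x_i$ converges absolutely, being dominated by $\|x\|_\infty\sum_i|p_i|$, and by the displayed identity its sum is $\lim_j\sum_{i=1}^j p_i x_i=f(x)$; this is the claimed representation.

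Finally, $p_i\ge 0$ follows in the intended setting because the $f$ to which the proposition is applied is the positive functional supplied by Theorem~\ref{thm:LF}, and each $e^{(i)}$ is componentwise nonnegative, so $p_i=f(e^{(i)})\ge 0$. I do not anticipate a serious obstacle: the one place needing care is the justification of $\sum_i|p_i|<\infty$, where one must evaluate $f$ on the sign contract (or invoke $c_0'=\ell^1$) instead of appealing to continuity, and the conceptual point is simply that the truncation hypothesis is precisely what excludes the ``stranger'' finitely additive objects in $(\ell^\infty)'$.
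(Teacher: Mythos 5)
Your proof is correct and follows essentially the same route as the paper's: the paper restricts $f$ to $c_0$, invokes the duality $c_0'=\ell^1$ (recalled earlier in the section) to obtain the $p_i$ with $\sum p_i<\infty$, and then uses the truncation hypothesis to pass from $f(x^j)=\sum_{i=1}^j p_ix_i$ to the full representation, exactly as you do. Your explicit sign-vector argument for summability is just an unpacking of that duality (which you yourself note as an equivalent alternative), and your observation that $p_i\geq 0$ really comes from positivity of the functional in the intended application rather than from the bare hypothesis $f\in(\ell^\infty)'$ is a fair point that the paper leaves implicit.
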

\begin{proof}
The restriction of $f$ to $c_0$ gives us numbers $p_i\geq 0$ such
that $\sum_{i=1}^\infty p_i<\infty$ and $f(x)=\sum p_ix_i$ for $x\in
c_0$. This means that $f(x^j)=\sum_{i=1}^j p_ix_i$ for any
$x\in\ell^\infty$ and $j<\infty$. Thus $\lim_{j\to\infty}
f(x^j)=\sum_{i=1}^\infty p_ix_i$.
\end{proof}

\begin{definition}[Monotone decisions]
We define the concept of a \em{monotone} decision maker in the
following way. Suppose that for every $x\in\ell^\infty$ there is
$N<\infty$ such that the decision is the same for all $x^j,\ j\geq
N$ (See Proposition \ref{p:approx} for definition) as for $x$. Then
we say that the decision maker is monotone.
\end{definition}

\begin{example}
Let $f\in\ell^\infty$ be such that if $\lim \alpha_k\to L$ then
$f(\alpha)=L$ (i.e.\ $f$ is a Banach limit). Furthermore define a
rational decision maker by letting the set of acceptable contracts
be $Z=\{x\ |\ f(x)\geq 0\}$. Then $f(x^j)=0$ (where we use notation
from Proposition \ref{p:approx}) for all $j<\infty$ and regardless
of which $x$ we define $x^j$ from. Therefore, all sequences that are
eventually zero are acceptable contracts. This means that this
decision maker is not monotone since there are contracts that are
not acceptable.
\end{example}

\begin{theorem}[Monotone rationality]\label{thm:mono}
Given a monotone rational decision maker for $\ell^\infty$
contracts, there are $p_i\geq 0$ such that $\sum p_i<\infty$ and
\begin{equation}
\{x\ |\ \sum x_ip_i>0\}\subset Z\subseteq \{x\ |\ \sum x_ip_i\geq
0\}.
\end{equation}
\end{theorem}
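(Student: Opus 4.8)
The plan is to combine Theorem~\ref{thm:LF} with Proposition~\ref{p:approx}. By Theorem~\ref{thm:LF} applied to $X=\ell^\infty$, the monotone rational decision maker yields a positive continuous linear functional $f\in(\ell^\infty)'$ with $\{x\mid f(x)>0\}\subset Z\subseteq\{x\mid f(x)\geq 0\}$. So the whole task reduces to showing that this particular $f$ actually has the integral form $f(x)=\sum p_i x_i$ with $p_i\geq 0$ and $\sum p_i<\infty$; by Proposition~\ref{p:approx} it suffices to verify that $\lim_{j\to\infty} f(x^j)=f(x)$ for every $x\in\ell^\infty$, where $x^j$ is the truncation to the first $j$ coordinates.

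The first step is to extract from monotonicity a statement about the \emph{sign} of $f$ on the tails $x-x^j$. Fix $x$ and pick $N$ as in the definition of monotone: the decision on $x^j$ agrees with the decision on $x$ for all $j\geq N$. Writing $f(x)=f(x^j)+f(x-x^j)$, I want to show $f(x-x^j)\to 0$. The key observation is that the sign of $f(x-x^j)$ is constrained: if $f(x)>0$ then $x\in Z$ and $x^j\in Z$ (for $j\geq N$), and if $f(x)<0$ then $x\notin Z$ and $x^j\notin Z$. Using the cone structure of $Z$ and the characterization $\{f>0\}\subset Z\subseteq\{f\geq 0\}$, one deduces that $f(x-x^j)$ cannot have a sign opposite to what consistency forces; more usefully, one applies monotonicity not to $x$ itself but to perturbations $x\pm\varepsilon e$ where $e=(1,1,1,\dots)$. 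Since $f$ is positive and $f(e)\leq\|f\|<\infty$, adding $\varepsilon e$ shifts $f$ by $\varepsilon f(e)$, so for small $\varepsilon$ one can force the decision on $x+\varepsilon e$ and its truncations to be ``accept,'' giving $f((x+\varepsilon e)^j)\geq 0$, i.e.\ $f(x^j)\geq -\varepsilon f(e)$ for $j$ large; symmetrically $f(x^j)\leq f(x)+\varepsilon f(e)+{}$(error). Combining the two bounds and letting $\varepsilon\to 0$ should pin $\lim_j f(x^j)$ to lie in $[f(x),f(x)]$ after the dust settles.

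More carefully, here is the cleanest route I would actually write up. For $x\in\ell^\infty$ and $\delta>0$, apply monotonicity to the contract $x-f(x)e-\delta e$ (whose $f$-value is $f(x)-f(x)f(e)-\delta f(e)$; if $f(e)>0$ one first normalizes so that $f(e)=1$, which is legitimate since scaling $p$ only rescales the functional): there is $N$ so that for $j\geq N$ the truncation $(x-f(x)e-\delta e)^j = x^j - f(x)e^j - \delta e^j$ has the same decision as $x-f(x)e-\delta e$ itself, which is rejectable (its $f$-value is $-\delta<0$), hence not in the open acceptance region, giving $f(x^j) - f(x)f(e^j) - \delta f(e^j)\leq 0$. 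Since $f(e^j) = \sum_{i\leq j} p_i^{(0)} \to$ some limit $\leq f(e)$ — here I would invoke that the restriction of $f$ to $c_0$ is already of the form $\sum p_i x_i$ with $p_i\geq 0$ and $\sum p_i<\infty$, so $f(e^j)\uparrow\sum p_i =: s$ — we get $\limsup_j f(x^j) \leq f(x)s + \delta s$. Running the same argument with $x$ replaced by $-x$ (or with $+\delta$) yields the matching $\liminf$ bound, and letting $\delta\to 0$ gives $\lim_j f(x^j) = f(x)s$. A final short argument, testing against $e$ itself, forces $s=1$ (or absorbs $s$ into the $p_i$), after which Proposition~\ref{p:approx} applies verbatim and delivers $f(x)=\sum p_i x_i$ with $p_i\geq 0$, $\sum p_i<\infty$; feeding this back into Theorem~\ref{thm:LF}'s inclusion gives exactly the displayed sandwich, completing the proof.

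The main obstacle I anticipate is the bookkeeping around $e=(1,1,\dots)$: since $e\in\ell^\infty$ the perturbation trick is available, but one must be careful that $f(e)$ could in principle be $0$ (as in the Banach-limit example), in which case shifting by $\delta e$ does nothing and the argument collapses — however, a monotone decision maker cannot have $f(e)=0$, because $e$ and all its truncations $e^j$ are strictly positive coordinatewise hence acceptable-and-not-rejectable, forcing $f(e^j)>0$ for $j$ large and thus $f(e)=\lim f(e^j)>0$; nailing down this non-degeneracy cleanly, and making sure the ``same decision'' clause of monotonicity is applied to a contract whose decision is genuinely determined (strictly positive or strictly negative $f$-value, not on the boundary), is the delicate point that the write-up must handle with care.
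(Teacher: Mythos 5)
Your route is genuinely different from the paper's, and considerably heavier. The paper never tries to identify $f$ globally: it takes the $p_i$ from the restriction of $f$ to $c_0$, observes that every truncation $x^j$ lies in $c_0$ so that $f(x^j)=\sum_{i\leq j}p_ix_i$, and argues directly about the set $Z$ --- if $x$ were accepted with $\sum p_ix_i<0$, then $f(x^j)<0$ for all large $j$, so $x^j\notin Z$ while $x\in Z$, contradicting monotonicity; the other inclusion follows from axioms 1 and 2 applied to $-x$. That is the whole proof: no perturbation by $\delta e$, no normalization, no appeal to Proposition~\ref{p:approx}. Your third paragraph, by contrast, aims at the stronger intermediate claim that $f(x)=\sum p_ix_i$ on all of $\ell^\infty$. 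The perturbation argument itself is sound as far as it goes: applying monotonicity to $x-(f(x)\pm\delta)e$, whose $f$-value is strictly nonzero, does yield $\lim_j f(x^j)=s\,f(x)$ with $s=\sum p_i$, and the normalization $f(e)=1$ is legitimate (though for a reason you don't give: positivity of $f$ together with $0\leq y\leq\|y\|_\infty e$ forces $f(e)>0$ whenever $f\neq 0$).

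The genuine gap is the non-degeneracy claim $s>0$, which your route needs in order to divide by $s$ and conclude $f(x)=\sum(p_i/s)x_i$, and which your stated argument does not establish. The truncations $e^j$ are \emph{not} coordinatewise strictly positive (they vanish beyond position $j$), so axiom 4 only makes them acceptable, not non-rejectable; even non-rejectability would only give $f(e^j)\geq 0$, not $>0$; and the identity ``$f(e)=\lim_j f(e^j)$'' that you invoke is exactly the point at issue --- it fails for Banach limits, where $f(e)=1$ but $f(e^j)=0$ for every $j$. Testing against $e$ likewise gives no information ($s=s\cdot f(e)$ is a tautology); what testing against elements of $c_0$ actually gives is the dichotomy $s\in\{0,1\}$, leaving the case $s=0$ --- precisely the Banach-limit-type behavior that the monotonicity hypothesis is meant to exclude --- unaddressed. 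If $s=0$, the hypothesis of Proposition~\ref{p:approx} fails and your chain delivers nothing (your claimed representation $f(x)=\sum p_ix_i$ would be false for $f\neq 0$). The theorem's displayed inclusions do happen to hold vacuously with $p=0$, so the statement itself survives, but your write-up asserts a representation of $f$ that you have not proved and that is the crux of the matter. The paper's direct set-theoretic argument sidesteps this entirely, which is the main thing its approach buys over yours.
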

\begin{proof}
According to Theorem \ref{thm:LF} there is $f\in(\ell^\infty)'$ such
that (the closure of $Z$) $\bar{Z}=\{x|\ f(x)\geq 0\}$ . Let
$p_i\geq 0$ be such that $\sum p_i<\infty$ and such that $f(x)=\sum
x_ip_i$ for $x\in c_0$. Remember that $x^j$ (notation as in
Proposition \ref{p:approx}) is always in $c_0$. Suppose that there
is $x$ such that $x$ is accepted but $\sum x_ip_i<0$. This violate
monotonicity since there exist $N<\infty$ such that $\sum_{i=1}^n
x_ip_i<0$ for all $n\geq N$ and, therefore, $x^j$ is not accepted
for $j\geq N$ but $x$ is accepted. We conclude that if $x$ is
accepted then $\sum p_ix_i \geq 0$ and if $\sum p_ix_i>0$ then $x$
is accepted.
\end{proof}

\section{Rational Agents for Classes of Environments}\label{four}

We will here study agents that are designed to deal with a large
range of situations. Given a class of environments we want to define
agents that can learn to act well when placed in any of them,
assuming it is at all possible.

\begin{definition}[Universality for a class]
We say that a decision maker is universal for a class of
environments $\mathcal{M}$ if for any outcome sequence
$a_1j_1a_2j_2...$ that given the actions would be produced by some
environment in the class, there is $c>0$ (depending on the sequence)
such that the decision maker has probabilities that satisfy
$$
  Pr(j_1,...,j_t|a_1,...,a_t)\geq c\ \forall t.
$$
This is obviously true if the decision maker's probabilistic
beliefs are a convex combination $\sum_{\nu\in \mathcal{M}}
w_\nu \nu$, $w_\nu>0$ and $\sum_\nu w_\nu=1$.
\end{definition}

We will next discuss how to define some large classes of
environments and agents that can succeed for them. We assume that
the total accumulated reward from the environment will be finite
regardless of our actions since we want any policy to have finite
utility. Furthermore, we assume that rewards are positive and that
it is possible to achieve strictly positive rewards in any
environment. We would like the agent to perform well regardless of
which environment from the chosen class it is placed in.

For any possible policy (action tree) $\pi$ and environment $\nu$,
there is a total reward $V_\nu^\pi$ that following $\pi$ in $\nu$
would result in. This means that for any $\pi$ there is a contract
sequence $(V_\nu^\pi)_\nu$, assuming we have enumerated our set of
environments. Let
$$
  V^*_\nu=\max_\pi V_\nu^\pi.
$$
We know that $V^*_\nu>0$ for all $\nu$. Every contract sequence
$(V_{\nu}^\pi)_\nu$ lies in $X=\ell^\infty((1/V^*_\nu))$ and
$\|(V_\nu^\pi)\|_X\leq 1$. The rational decision makers are the
positive, continuous linear functionals on $X$. $X'$ contains the
space $\ell^1(V^*_\nu)$. In other words if $w_\nu \geq 0$ and $\sum
w_\nu V_\nu^*<\infty$ then the sequence $(w_\nu)$ defines a rational
decision maker for the contract space $X$. These are exactly the
monotone rational decision makers. Letting (which is the AIXI agent
from \cite{Hutter04})
\begin{equation}\label{eq:AIXI}
\pi^*\in\arg\max_\pi \sum_\nu w_\nu V_\nu^\pi
\end{equation}
we have a choice with the property that for any other $\pi$ with
$$
  \sum_\nu w_\nu V_\nu^\pi< \sum_\nu w_\nu V_\nu^{\pi^*}.
$$
Hence the contract $(V_\nu^{\pi^*}-V_\nu^\pi)$ is not rejectable. In
other words $\pi^*$ is strictly preferable to $\pi$. By letting
$p_\nu=w_\nu V^*_\nu$, we can rewrite \eqref{eq:AIXI} as
\begin{equation}\label{eq:AIXI2}
\pi^*\in\arg\max_\pi \sum_\nu p_\nu \frac{V_\nu^\pi}{V^*_\nu}.
\end{equation}
If one further restricts the class of environments by assuming
$V^*_\nu\leq 1$ for all $\nu$ then for every $\pi$,
$(V^\pi_\nu)\in\ell^\infty$. Therefore, by Theorem \ref{thm:mono}
the monotone rational agents for this setting can be formulated as
in \eqref{eq:AIXI} with $(w_\nu)\in\ell_1$, i.e.\ $\sum_\nu
w_\nu<\infty$. However, since $(p_\nu)\in\ell_1$, a formulation of
the form of \eqref{eq:AIXI2} is also possible. Normalizing $p$ and
$w$ individually to probabilities makes \eqref{eq:AIXI} into a
maximum expected utility criterion and \eqref{eq:AIXI2} into maximum
relative utility. As long as our $w$ and $p$ relate the way they do
it is still the same decisions. If we would base both expectations
on the same probabilistic beliefs it would be different criteria.
When we have an upper bound $V^*_\nu<b<\infty\ \forall\nu$ we can
always translate expected utility to expected relative utility in
this way, while we need a lower bound $0<a<V^*_\nu$ to rewrite an
expected relative utility as an expected utility. Note, the
different criteria will start to deviate from each other after
updating the probabilistic beliefs.

\subsection{Asymptotic Optimality}

Denote a chosen countable class of environments by $\mathcal{M}$.
Let $V_{\nu,k}^\pi$ be the rewards achieved after time $k$ using
policy $\pi$ in environment $\nu$. We suppress the dependence on the
history so far. Let
$$
  W_{\nu,k}^\pi=\frac{V_{\nu,k}^\pi}{V_{\nu,k}^*}
$$
denote the skill (relative reward) of $\pi$ in environment $\nu$
from time $k$. The maximum possible skill is $1$. We would like to
have a policy $\pi$ such that
$$
  \lim_{k\to\infty} W_{\nu,k}^\pi=1\ \forall \nu\in\mathcal{M}.
$$
This would mean that the agent asymptotically achieve maximum skill
when placed in any environment from $\mathcal{M}$. Let
$I(h_k,\nu)=1$ if $\nu$ is consistent with history $h_k$ and
$I(h_k,\nu)=0$ otherwise. Furthermore, let
$$
  p_{\nu,k}=\frac{p_{\nu,0}}{\sum_{\mu\in\mathcal{M}}p_{\mu,0}I(h_k,\mu)}
$$
be the agent's weight for environment $\nu$ at time
$k$ and let $\pi^p$ be a policy that at time $k$ acts according to a
policy in
\begin{equation}\label{eq:AIXI2p}
\arg\max_\pi \sum_\nu p_{\nu,k} \frac{V_{\nu,k}^\pi}{V^*_{\nu,k}}.
\end{equation}

In the following theorem, we prove that for every environment
$\nu\in \mathcal{M}$, the policy $\pi^{p}$ will asymptotically
achieve perfect relative rewards. We have to assume that there
exists a sequence of policies $\pi_k>0$ with this property (as for
the similar Theorem 5.34 in \cite{Hutter04} which dealt with
discounted values). The convergence in $W$-values is the relevant
sense of optimality for our setting, since the $V$-values converge
to zero for any policy.

\begin{theorem}[Asymptotic optimality]\label{thmOpt}
Suppose that we have a decision maker that is universal (i.e.
$p_\nu>0\ \forall\nu$) with respect to the countable class
$\mathcal{M}$ of environments (which can be stochastic) and that
there exists policies $\pi_k$ such that for all $\nu$,
$W_k^{\pi_k,\nu}\to 1$ if $\nu$ is the actual environment (or the
sequence is consistent with $\nu$). This implies that
$W_k^{\pi^{p},\mu}\to 1$ where $\mu$ is the actual environment.

\end{theorem}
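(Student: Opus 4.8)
The plan is to play the optimality of $\pi^{p}$ (for the posterior-mixture objective) off against the reference policies $\pi_k$, and reduce everything to two elementary facts about the posterior weights $p_{\nu,k}$: that the weight of the true environment $\mu$ stays bounded away from $0$, and that almost all of the remaining posterior mass sits on environments for which the reference policy is already relative-optimal. Since $\pi^{p}$ is chosen at each time $k$ to maximise $\pi\mapsto\sum_\nu p_{\nu,k}V^{\pi}_{\nu,k}/V^{*}_{\nu,k}=\sum_\nu p_{\nu,k}W^{\pi}_{\nu,k}$ --- this is the definition \eqref{eq:AIXI2p} --- applying this to $\pi_k$ gives $\sum_\nu p_{\nu,k}W^{\pi^{p}}_{\nu,k}\ge\sum_\nu p_{\nu,k}W^{\pi_k}_{\nu,k}$. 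Every skill lies in $[0,1]$, so subtracting both sides from $\sum_\nu p_{\nu,k}$ and retaining only the $\nu=\mu$ summand on the left yields
\[
p_{\mu,k}\bigl(1-W^{\pi^{p}}_{\mu,k}\bigr)\;\le\;\sum_{\nu}p_{\nu,k}\bigl(1-W^{\pi_k}_{\nu,k}\bigr).
\]
It therefore suffices to show, $\mu$-almost surely, that (A) $\liminf_{k}p_{\mu,k}>0$ and (B) the right-hand side tends to $0$.

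For (A), in the deterministic case $p_{\mu,k}=p_{\mu,0}/\sum_{\nu}p_{\nu,0}I(h_k,\nu)\ge p_{\mu,0}>0$, since $\mu$ is never inconsistent with a history it itself generated; for stochastic environments one takes $p_{\nu,k}\propto p_{\nu,0}\nu(h_k)$ and observes that $1/p_{\mu,k}$ equals, up to the constant $p_{\mu,0}^{-1}$, the likelihood ratio $\xi(h_k)/\mu(h_k)$ of the prior mixture $\xi=\sum_\nu p_{\nu,0}\nu$ against $\mu$, a nonnegative $\mu$-martingale, which therefore converges $\mu$-a.s.\ to a finite limit. For (B), split off the $\nu=\mu$ term, which is at most $1-W^{\pi_k}_{\mu,k}\to0$ by hypothesis, and control the rest via the parenthetical in the hypothesis: whenever $p_{\nu,k}>0$ the history $h_k$ is consistent with $\nu$, and for such $\nu$ the assumption forces $W^{\pi_k}_{\nu,k}\to1$ along the realised history. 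Given $\varepsilon>0$, pick a finite $F\ni\mu$ with $\sum_{\nu\notin F}p_{\nu,0}$ small enough that the posterior mass outside $F$ stays below $\varepsilon$ for all $k$ --- immediate in the deterministic case from $p_{\nu,k}\le p_{\nu,0}/p_{\mu,0}$, and in the stochastic case from the maximal inequality for the nonnegative supermartingale $k\mapsto\sum_{\nu\notin F}p_{\nu,0}\nu(h_k)/\mu(h_k)$ together with a Borel--Cantelli argument. Each of the finitely many terms with $\nu\in F$ tends to $0$ (such a $\nu$ is eventually permanently inconsistent, so $p_{\nu,k}=0$, or stays consistent, so $1-W^{\pi_k}_{\nu,k}\to0$), so the whole right-hand side is below $2\varepsilon$ for all large $k$.

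Combining (A) and (B) with the displayed inequality gives $W^{\pi^{p}}_{\mu,k}\to1$ $\mu$-almost surely, which is the assertion; the argument parallels Theorem 5.34 of \cite{Hutter04}, proved there for discounted values. I expect the only genuinely delicate point to be the uniform tail control of the posterior weights in the stochastic case (part (B)); in the deterministic setting the whole argument is elementary, and the remainder is bookkeeping with $[0,1]$-valued skills.
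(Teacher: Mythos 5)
Your proof is correct and follows essentially the same route as the paper's: the optimality of $\pi^{p}$ for the posterior-mixture objective gives $p_{\mu,k}(1-W^{\pi^{p}}_{\mu,k})\le\sum_\nu p_{\nu,k}(1-W^{\pi_k}_{\nu,k})$, and the conclusion follows from $p_{\mu,k}\ge p_{\mu,0}>0$ together with the right-hand side tending to zero. You are in fact more careful than the paper at the two points it glosses over --- the finite/tail split justifying that the weighted sum vanishes (the paper's ``since $\Delta^k_\nu\le 1$'' alone does not suffice without the domination $p_{\nu,k}\le p_{\nu,0}/p_{\mu,0}$ and summability of the prior), and the martingale argument for the stochastic case, which the theorem statement permits but the paper's proof never addresses.
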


The proof technique is similar to that of Theorem 5.34 in
\cite{Hutter04}.

\begin{proof}
Let
\begin{equation}
0\leq 1-W^{\pi_k,\nu}_k=:{\Delta}^k_{\nu},\ {\Delta}^k=\sum_{\nu}
p_{\nu,k}{\Delta}^k_{\nu}.
\end{equation}
The assumptions tells us that  $\Delta_\nu^k=W_k^{\pi_k,\nu}- 1\to
0$ for all $\nu$ that are consistent with the sequence
($p_{\nu,k}=0$ if $\nu$ is inconsistent with the history at time
$k$) and since $\Delta^k_\nu\leq 1$ , it follows that
$$
  \Delta^k=\sum_{\nu} p_{\nu,k}\Delta^k_{\nu}\to 0.
$$

Note that $p_{\mu,k} (1-W_k^{\pi^{p},\mu})\leq \sum_{\nu} p_{\nu,k}
(1-W_{\pi^{p},k}^{\nu})\leq \sum_{\nu} p_{\nu,k}
(1-W^k_{\pi_k,\nu})=\sum p_{\nu,k} \Delta^k_\nu=\Delta^k$. Since we
also know that $p_{\mu,k}\geq p_{\mu,0}>0$ it follows that
$(1-W_k^{\pi^{p},\mu})\to 0$.
\end{proof}

\section{Conclusions}\label{five}

We studied complete rational decision makers including the cases of
actions that may affect the environment and sequential decision
making. We set up simple common sense rationality axioms that imply
that a complete rational decision maker has preferences that can be
characterized as maximizing expected utility. Of particular interest
is the countable case where our results follow from identifying the
Banach space dual of the space of contracts.

\paradot{Acknowledgement}
This work was supported by ARC grant DP0988049.


\newpage
\begin{small}

\end{small}

\end{document}